\begin{document}

\title{LCM: Log Conformal Maps for Robust Representation Learning to Mitigate Perspective Distortion} 

\titlerunning{LCM}

\author{
Meenakshi Subhash Chippa\inst{1,*}\orcidlink{0009-0000-2770-6271} \and
Prakash Chandra Chhipa\inst{1}\orcidlink{0000-0002-6903-7552} \and
Kanjar De\inst{2}\orcidlink{0000-0003-0221-8268} \and
Marcus Liwicki\inst{1}\orcidlink{0000-0003-4029-6574} \and
Rajkumar Saini\inst{1}\orcidlink{0000-0001-8532-0895}
}

\authorrunning{Chippa et al.}

\institute{Luleå Tekniska Universitet, Sweden \\
\email{\{prakash.chandra.chhipa, rajkumar.saini, marcus.liwicki\}@ltu.se} \\
\email{*meechi-2@student.ltu.se} \and
 Fraunhofer Heinrich-Hertz-Institut, Berlin, Germany \\
\email{kanjar.de@hhi.fraunhofer.de}}

\maketitle
\begin{abstract}

Perspective distortion (PD) leads to substantial alterations in the shape, size, orientation, angles, and spatial relationships of visual elements in images. Accurately determining camera intrinsic and extrinsic parameters is challenging, making it hard to synthesize perspective distortion effectively.
The current distortion correction methods involve removing distortion and learning vision tasks, thus making it a multi-step process, often compromising performance. Recent work leverages the Möbius transform for mitigating perspective distortions (MPD) to synthesize perspective distortions without estimating camera parameters. Möbius transform requires tuning multiple interdependent and interrelated parameters and involving complex arithmetic operations, leading to substantial computational complexity. To address these challenges, we propose Log Conformal Maps (LCM), a method leveraging the logarithmic function to approximate perspective distortions with fewer parameters and reduced computational complexity. We provide a detailed foundation complemented with experiments to demonstrate that LCM with fewer parameters approximates the MPD. We show that LCM integrates well with supervised and self-supervised representation learning, outperform standard models, and matches the state-of-the-art performance in mitigating perspective distortion over multiple benchmarks, namely Imagenet-PD, Imagenet-E, and Imagenet-X. Further LCM demonstrate seamless integration with person re-identification and improved the performance. Source code is made publicly available at \href{https://github.com/meenakshi23/Log-Conformal-Maps}{https://github.com/meenakshi23/Log-Conformal-Maps}.

\keywords{Perspective Distortion \and Robust Representation Learning \and Self-supervised Learning}
\end{abstract}

\section{Introduction}
\label{sec:intro}
Perspective distortion (PD) is a common issue in real-world imagery, complicating the development of computer vision applications. It arises from factors like camera position, depth, focal length, lens aberrations, and rotation, which affect the projection of 3D scenes onto 2D surfaces \cite{rahman2011efficient}. Accurately estimating these parameters for PD correction is challenging, posing a significant barrier to robust computer vision (CV) methods. 
\begin{figure*}[h]
  \centering
  \includegraphics[width=0.8\columnwidth]{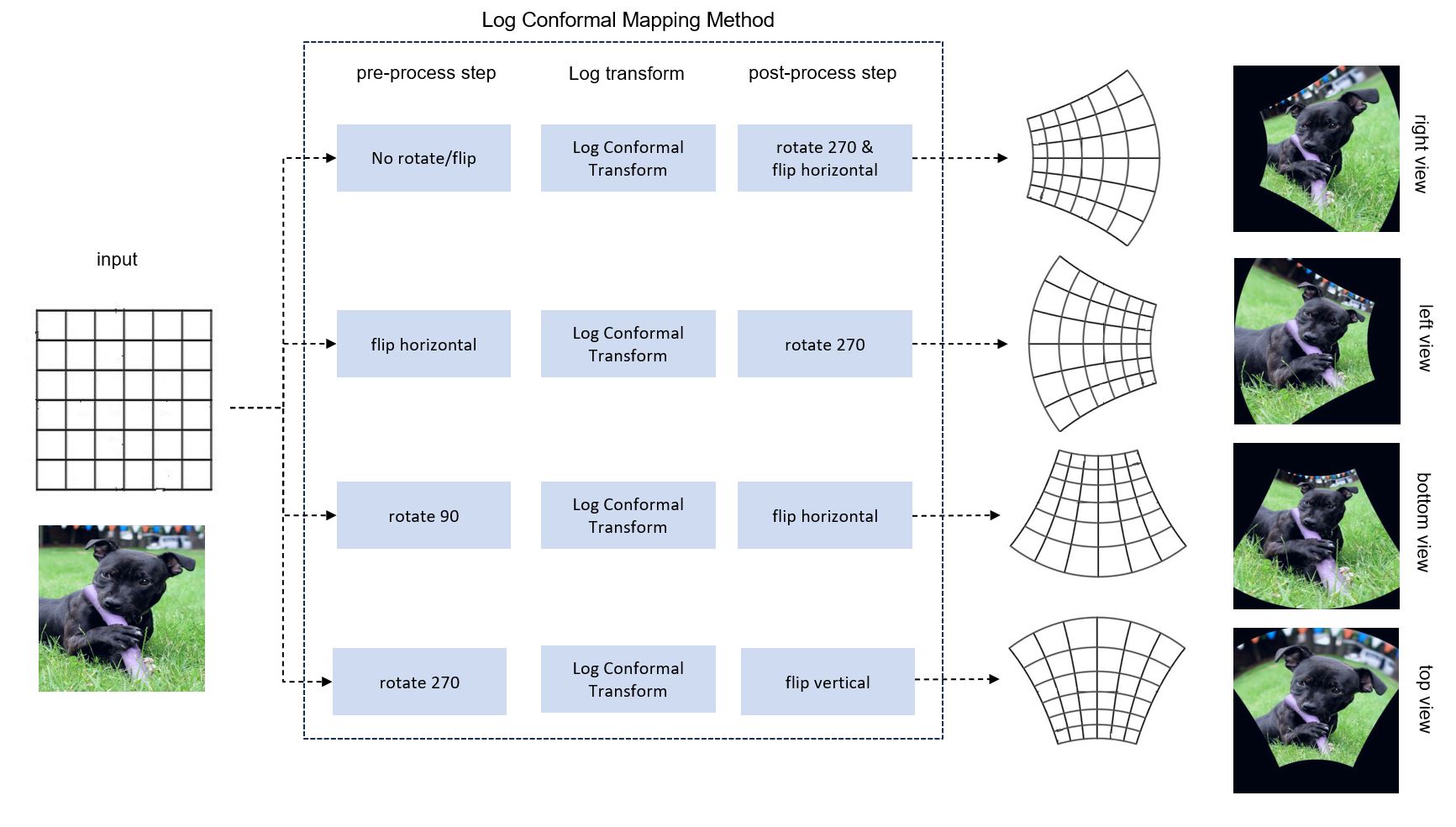}
  \vspace{-3mm}
  \caption{Log conformal Mapping Method (LCM). LCM obtains four perspective distorted views using auxiliary operations with Log Conformal transform.} 
  \label{fig:LCM_fig}
  \vspace{-6mm}
\end{figure*}
Earlier studies focused on distortion correction \cite{li2019blind},  \cite{yu2021pcls},  \cite{tan2021practical}, which makes vision task learning multi-steps. Recent work MPD \cite{chhipa2024m} provides insights on mitigating perspective distortion by synthetically mimicking the PD in the learning process using nonlinear and conformal Möbius \cite{arnold2008mobius} transform.

Previous research on image registration~\cite{vadapally2009image} demonstrates the potential of conformal log-polar mapping. Another study highlights the utility of complex logarithmic views \cite{bottger2006complex} for exploring small details within extensive visual geometry contexts. Focusing on non-linear and conformal properties of log transform, we proposed Log Conformal Mapping (LCM) to mitigate perspective distortion and match state-of-the-art robustness performance. LCM method in Figure \ref{fig:LCM_fig} demonstrates the synthesis of four perspective distorted views (left, right, top, bottom) mentioned in MPD \cite{chhipa2024m}. The main contributions of this paper are:\\

\noindent 1. To the best of our knowledge, this work is the first to employ Log Conformal Transform for synthesizing perspective distortion, introducing the Log Conformal Maps (LCM) method.\\
\noindent 2. We present analytical and empirical evidence demonstrating that LCM, with fewer parameters, effectively mitigates perspective distortion, comparable to the Möbius Transforms-based MPD method.\\
\noindent 2. We validate our approach through extensive experiments on multiple perspective distortion-affected benchmarks, showing improvements in the robustness of both supervised and self-supervised models and showing effectiveness on real-world application, person re-identification.
\section{Related Work}
Perspective distortion is a pervasive issue that significantly impacts the performance of various downstream tasks in computer vision. Perspective distortion arises from the camera's relative positioning to objects in the scene, causing apparent deformation in the captured image. This distortion results from the camera position, depth, intrinsic parameters (e.g., focal length, lens distortion), and extrinsic parameters (e.g., rotation, translation). These elements collectively affect the projection of 3D scenes onto 2D planes, impacting semantic interpretation and local geometry~\cite{rahman2011efficient}. Accurately estimating these parameters for correcting perspective distortion is challenging and remains a critical barrier to developing robust computer vision methods. Another work \cite{cho2021camera} proposes a model that adapts 3D human pose estimation in videos to arbitrary camera distortions using meta-learning and a novel synthetic data generation technique.

Earlier studies on distortion correction primarily focused on rectifying perspective distortion, with limited emphasis on the robustness of computer vision applications. 
These correction methods typically transform computer vision tasks into two-stage processes: initially rectifying the distortion and subsequently engaging in task-specific learning. GeoNet \cite{li2019blind} employs convolutional neural networks (CNNs) to predict distortion flow in images without prior knowledge of the distortion type. Perspective Crop Layers (PCL) \cite{yu2021pcls} use perspective crop layers within CNNs to correct perspective distortions for 3D pose estimation. A cascaded deep structure network \cite{tan2021practical} addresses wide-angle portrait distortions without requiring calibrated camera parameters. Another study \cite{zhao2019learning} predicts per-pixel displacement for face portrait undistortion. PerspectiveNet \cite{jin2023perspective}, and ParamNet \cite{jin2023perspective} predict perspective fields and derive camera methods for camera calibration, respectively. Methods for fisheye image distortion correction have also been developed \cite{yang2023innovating,yin2018fisheyerecnet}.

Earlier works have advanced computer vision tasks under perspective distortion. \cite{wang2023zolly} introduced a method to correct perspective distortions in human mesh reconstruction from images with varying focal lengths. \cite{kocabas2021spec} proposed a framework for estimating camera parameters in the wild to improve 3D human pose and shape estimation. However, these methods rely on an inefficient two-step process and do not focus on robust representations for computer vision tasks.

Recently, the Möbius Transform for Mitigating Perspective Distortions (MPD)~\cite{chhipa2024m} introduced, emphasizing the impact of perspective distortion in real-world computer vision applications such as object recognition and detection. MPD utilizes Möbius transformations, a type of conformal mapping, to synthesize perspective distortion without estimating camera parameters. This method has shown the effectiveness of Möbius transformations in creating controlled distortions that mimic perspective distortion. 
While the Möbius transform effectively mimics perspective distortion, it presents significant challenges. MPD requires precise tuning of four interrelated complex parameters (\(a, b, c, d\)), which are not easily interpretable. This complexity makes the process sub-optimal and time-intensive.
Furthermore, the Möbius transform inherently involves a series of complex arithmetic operations, specifically complex multiplications and divisions, which contribute to its computational intensity. These operations require substantial computational resources, thereby limiting the efficiency of the transformation. Additionally, maintaining the stability of the Möbius transform is particularly challenging near singularities, where the condition \(cz + d = 0\) leads to undefined behavior. These complexities highlight the need for more efficient and robust methods to synthesize perspective distortion, which underscores the need for continued research in this area.
\section{Problem Formulation and Proposed Method}
\label{sec:pfpr}
We begin by formalizing perspective distortion and reviewing the Möbius transform from MPD~\cite{chhipa2024m}, highlighting its challenges. Next, we introduce our novel method, Log Conformal Maps (LCM), and provide proof of its non-linearity and conformity, establishing LCM as a viable candidate for mitigating perspective distortion. Furthermore, we demonstrate the efficiency of LCM by proving theoretically that LCM, with fewer parameters, can approximate the Möbius transform from MPD for mimicking perspective distortion.

\subsection{Mathematical Foundation of Perspective Distortion}
\label{sec:perspective_distortion}
Perspective distortion arises from the projection of three-dimensional (3D) scenes onto a two-dimensional (2D) plane, typically described by the perspective projection model. This projection is inherently non-linear and non-conformal, causing significant visual distortions, especially when objects are viewed from angles other than the perpendicular.Perspective projection can be expressed as:
\[
\begin{pmatrix}
x' \\
y' \\
w'
\end{pmatrix}
= 
\begin{pmatrix}
f & 0 & 0 & 0 \\
0 & f & 0 & 0 \\
0 & 0 & 1 & 0
\end{pmatrix}
\begin{pmatrix}
X \\
Y \\
Z \\
1
\end{pmatrix}
\]
where \((X, Y, Z)\) are the coordinates of a point in 3D space, \((x', y')\) are the coordinates of the projected point on the 2D plane, \(f\) is the focal length, and \(w'\) is a scaling factor given by \(w' = Z\).

The 2D coordinates \((x, y)\) are obtained by normalizing with \(w'\):
\[
x = \frac{x'}{w'} = \frac{fX}{Z}, \quad y = \frac{y'}{w'} = \frac{fY}{Z}
\]

This non-linear relationship between the 3D coordinates and their 2D projections causes perspective distortion, where objects farther from the camera appear smaller and parallel lines seem to converge. Additionally, this projection does not preserve angles (non-conformal), further contributing to the distortion. Affine transformations, which include translations, rotations, scaling, and shearing, can be represented as linear transformations of the form:
\[
\mathbf{A} \mathbf{p} + \mathbf{b}
\]
where \(\mathbf{A}\) is a matrix and \(\mathbf{b}\) is a translation vector. However, affine transformations cannot model the non-linear and non-conformal nature of perspective distortion, as they preserve parallelism and ratios of distances along parallel lines, which are not preserved in perspective projection.

Given these limitations, non-linear transformations Möbius transform from MPD \cite{chhipa2024m} and the proposed Log Conformal Maps, are required to accurately model and mitigate the effects of perspective distortion in computer vision applications.

\subsection{Challenges of the Möbius Transform}
\label{sec:mobius}
The Möbius transformation, also known as a bilinear or fractional linear transformation, is a conformal mapping that preserves angles. It is defined as:
\begin{equation}
\Phi(z) = \frac{az + b}{cz + d}
\end{equation}
where \(a, b, c, d\) are complex numbers and \(ad - bc \neq 0\). This transformation maps the extended complex plane (including the point at infinity) onto itself. The MPD method utilizes the Möbius transformation to synthesize perspective distortions in images. This process involves mapping image coordinates to the complex plane, applying the Möbius transformation, and then mapping the transformed coordinates back to the image plane. 
\subsubsection{sub-optimal}
Despite its effectiveness, the Möbius transform poses several challenges. The transformation requires careful tuning of four interdependent complex parameters \((a, b, c, d)\), each ranging between 0 and 1, creating a parameter space 
\[
\mathcal{P} = \{(a, b, c, d) \in \mathbb{C}^4 \mid ad - bc \neq 0\}.
\]
This space grows exponentially because each parameter can take on numerous values, leading to \(n^4\) possible combinations if discretized into \(n\) steps. The influence of parameters on each other means a change in one parameter significantly affects the others, complicating the optimization process. Determining the parameters, either heuristically or through a learning algorithm, is particularly challenging due to the vast and complex search space required to accurately mimic perspective distortion.

\subsubsection{compute intensive}
The fractional aspect of the Möbius transformation, involving complex division, adds significant computational complexity. Calculating the real and imaginary parts of both the numerator and denominator, followed by their division, requires multiple operations per pixel:

\begin{equation}
\text{Re}(\Phi(z)) = \frac{\text{Re}(az + b) \cdot \text{Re}(cz + d) + \text{Im}(az + b) \cdot \text{Im}(cz + d)}{(\text{Re}(cz + d))^2 + (\text{Im}(cz + d))^2}
\end{equation}
\begin{equation}
\text{Im}(\Phi(z)) = \frac{\text{Im}(az + b) \cdot \text{Re}(cz + d) - \text{Re}(az + b) \cdot \text{Im}(cz + d)}{(\text{Re}(cz + d))^2 + (\text{Im}(cz + d))^2}
\end{equation}

These operations, repeated for each pixel, impose a substantial computational burden, especially for high-resolution images. Additionally, near singularities where \(cz + d \approx 0\), the transformation becomes unstable, leading to very large or undefined values.

\subsection{Proposed Method - Log Conformal Maps (LCM)}
\label{sec:lcm}

To address the challenges posed by the Möbius transform, we propose the Log Conformal Maps (LCM) method. LCM leverages the logarithmic function to approximate perspective distortions with fewer parameters and reduced computational complexity. With flip and rotation operations, LCM achieves all four perspective-distorted views (left, right, top, bottom) similar to the MPD \cite{chhipa2024m}. The Log Conformal Transform is defined as:
\begin{equation}
\Psi(z) = \log(kz + c)
\end{equation}
where \(k\) and \(c\) are complex numbers. The logarithmic function is non-linear yet conformal, preserving angles and providing a smooth distortion. LCM outlined in Algorithm 1. Proof on LCM non-linearity and conformality in sec. \ref{proof:lcm1}.
\begin{algorithm}
\caption{LCM Method for Perspective Distortion}
\footnotesize
\begin{algorithmic}[1]
\Require{Input image $I$, width parameters $k$, and constant parameters $c$, rotation angles $\theta \in \{0^\circ, 90^\circ, 270^\circ\}$, flip}
\Ensure{Transformed image $I_{LCM}$ representing perspective distortion}
\State $I_{\theta} \gets \text{rotate}(I, \theta) + flip$ \Comment{Rotate image by $\theta$ degrees and flip based on view}
\For{each pixel coordinate $(x, y)$ in $I_{\theta}$}
    \State $z \gets x + iy$ \Comment{Map pixel coordinates to complex vector}
    \State $z_{\log} \gets \log(kz + c)$ \Comment{Apply Log Conformal Transform}
    \State $x_{\log} + iy_{\log} \gets z_{\log}$ \Comment{Real and imaginary parts as transformed coordinates}
    \State $(x_d, y_d) \gets \text{round}(x_{\log}), \text{round}(y_{\log})$ \Comment{Discretize to nearest pixel values}
    \State $I_{LCM}(x_d, y_d) \gets I_{\theta}(x, y)$ \Comment{Assign pixel value to transformed coordinates}
\EndFor
\State $I_{LCM} \gets \text{rotate}(I_{LCM}, \theta) + flip$ \Comment{Rotate and flip as post processing}
\State $I_{LCM} \gets \text{Padding}(I_{LCM})$ \Comment{Padding for smoother transformed output}
\State \Return $I_{LCM}$
\end{algorithmic}
\label{algo:lcm}
\end{algorithm}

The Log Conformal Maps (LCM) method applies a logarithmic transform \(\Psi(z) = \log(kz + c)\) to complex coordinates \(z = x + iy\), providing non-linear and conformal mapping. Optionally, the method uses simple rotation and flip operations with transform to obtain all four perspective-distorted views: left, right, top, and bottom. This method simplifies parameter tuning with only the width parameter (\(k\), reducing computational complexity as the logarithmic function avoids the intensive arithmetic of fractions required by the Möbius transform.
\subsection{Non-linearity and Conformality of Log Conformal Maps}
\label{proof:lcm1}

\begin{theorem}
The Log Conformal Map (LCM) defined by \(\Psi(z) = \log(kz + c)\), where \(k\) and \(c\) are complex numbers and \(z\) is a complex variable, is both non-linear and conformal.
\end{theorem}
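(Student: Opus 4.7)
The plan is to split the statement into two independent parts and verify each separately: first that $\Psi$ fails the axioms of a linear map, and second that $\Psi$ is angle-preserving at every point where it is well-defined.

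For non-linearity, I would simply exhibit a failure of either additivity or homogeneity. The cleanest route is to note that any $\mathbb{C}$-linear map sends $0$ to $0$, whereas $\Psi(0)=\log c \neq 0$ for generic $c$, which already contradicts linearity. Alternatively, one can observe that $\log(kz_1 + c)+\log(kz_2+c)=\log\bigl((kz_1+c)(kz_2+c)\bigr)$ involves a product rather than a sum, so it cannot equal $\log(k(z_1+z_2)+c)$ in general. Either identity suffices.

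For conformality, I would invoke the standard fact from complex analysis that a holomorphic function is conformal (angle- and orientation-preserving) at every point where its derivative is nonzero. The argument then reduces to two short steps: (i) identify the domain on which $\Psi$ is holomorphic, and (ii) compute $\Psi'$ and check that it does not vanish. Since $kz+c$ is entire and $\log$ is holomorphic on $\mathbb{C}\setminus\{0\}$ after choosing a branch, $\Psi$ is holomorphic on $\mathbb{C}\setminus\{-c/k\}$. A direct chain-rule calculation gives
\[
\Psi'(z)=\frac{k}{kz+c},
\]
which is nonzero throughout the domain whenever $k\neq 0$. Hence $\Psi$ is conformal everywhere it is defined.

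The main obstacle, and really the only subtlety, is the multi-valuedness of the complex logarithm. I would address this by fixing the principal branch and restricting the domain to $\mathbb{C}\setminus\{-c/k\}$ together with an appropriate cut emanating from $-c/k$, so that $\Psi$ becomes a single-valued holomorphic function. The excluded point $z=-c/k$ is the analogue of the Möbius singularity $cz+d=0$ discussed in Section \ref{sec:mobius}; conformality fails only on this measure-zero set, which is harmless in the image-processing pipeline since the pixel grid can be shifted so that $-c/k$ lies outside the sampled region.
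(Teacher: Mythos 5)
Your proof follows essentially the same route as the paper's: non-linearity via failure of the superposition principle (using the observation that $\log(kz_1+c)+\log(kz_2+c)=\log\bigl((kz_1+c)(kz_2+c)\bigr)$ cannot equal $\log(k(z_1+z_2)+c)$ in general) and conformality via holomorphicity together with the non-vanishing derivative $\Psi'(z)=\frac{k}{kz+c}$. You are in fact more careful than the paper on one point: the paper asserts that $kz+c\neq 0$ for all $z\in\mathbb{C}$, which is false (it vanishes at $z=-c/k$ whenever $k\neq 0$), whereas you correctly restrict the domain to $\mathbb{C}\setminus\{-c/k\}$, fix a branch of the logarithm, and note that the excluded set is harmless for the imaging application; your alternative non-linearity argument via $\Psi(0)=\log c\neq 0$ is also a cleaner one-liner than the superposition computation.
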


\begin{proof}
To establish that \(\Psi(z) = \log(kz + c)\) is non-linear and conformal, we need to demonstrate two properties:

\noindent1. \textbf{Non-linearity}: The function \(\Psi(z)\) do not satisfy the superposition principle.\\
2. \textbf{Conformality}: The function \(\Psi(z)\) is holomorphic with a non-zero derivative, ensuring angle preservation.

\paragraph{Non-linearity}
A function \(f(z)\) is non-linear if it does not satisfy the superposition principle:
\[
\Psi(\alpha z_1 + \beta z_2) \neq \alpha \Psi(z_1) + \beta \Psi(z_2)
\]
for complex numbers \(z_1\), \(z_2\) and scalars \(\alpha\), \(\beta\). Consider two complex numbers \(z_1\) and \(z_2\):
\[
\Psi(z_1 + z_2) = \log(k(z_1 + z_2) + c)
\]
However,
\[
\Psi(z_1) + \Psi(z_2) = \log(kz_1 + c) + \log(kz_2 + c) = \log((kz_1 + c)(kz_2 + c))
\]

Certianly, \(\log(k(z_1 + z_2) + c) \neq \log((kz_1 + c)(kz_2 + c))\). Therefore, \(\Psi(z)\) is non-linear.

\paragraph{Conformality}
A function \(f(z)\) is conformal if it preserves angles locally, which is ensured if the function is holomorphic (complex differentiable) with a non-zero derivative.

\noindent1. \textbf{Holomorphicity}: 
   \[
   \Psi(z) = \log(kz + c)
   \]
   To show \(\Psi(z)\) is holomorphic, we need to demonstrate it is complex differentiable. The derivative of \(\Psi(z)\) is:
   \[
   \Psi'(z) = \frac{d}{dz} \log(kz + c) = \frac{k}{kz + c}
   \]

\noindent2. \textbf{Non-zero Derivative}:
   For \(\Psi(z)\) to be conformal, its derivative must be non-zero. Since \(k \neq 0\) and \(kz + c \neq 0\) for all \(z \in \mathbb{C}\), the derivative \(\Psi'(z) = \frac{k}{kz + c}\) is non-zero.

Therefore, since \(\Psi(z)\) is holomorphic with a non-zero derivative, it is conformal and preserves angles locally.

Combining these results, we conclude that the Log Conformal Map (LCM), \(\Psi(z) = \log(kz + c)\), is both non-linear and conformal.

\end{proof}

\section{Results and Discussions}

We trained ResNet50~\cite{he2016deep} for supervised learning using the protocol outlined in \cite{pytorch_vision}, and SimCLR~\cite{chen2020simple} for self-supervised contrastive learning on the ImageNet~\cite{deng2009imagenet} dataset. Additionally, the self-supervised method DINO~\cite{caron2021emerging} was trained following the authors' specified protocol. The term "Standard ResNet50" refers to the ResNet50 model trained on ImageNet as guided by \cite{pytorch_vision}, while "DINO" refers to the original works of SimCLR~\cite{chen2020simple} and DINO~\cite{caron2021emerging}. We evaluated our methods on three public datasets: Imagenet-PD~\cite{chhipa2024m}, explicitly introduced to benchmark images distorted by perspective distortion, and other perspective distortion-affected benchmarks, ImageNet-E~\cite{li2023imagenet} and Imagenet-X~\cite{idrissi2022imagenet}. The results of these evaluations are presented in this section.

\subsection{ImageNet-PD}
ImageNet-PD~\cite{chhipa2024m} was introduced to benchmark robustness to perspective distortion. This dataset is derived from ImageNet classes and consists of four subsets, each corresponding to a different direction—left, right, top, and bottom—mimicking perspective distortion.

\subsubsection{Supervised learning}
\label{sec:sup}
In Table~\ref{tab:supervised}, we provide results for ImageNet-PD. The hyper-parameter \(p\) represents the probability of applying LCM during the training process. From Table~\ref{tab:supervised}, it is evident that when training performance with standard protocol \cite{pytorch_vision}, there is a significant drop in accuracy. However, LCM adapted training results in a substantial boost in accuracy across all subsets of the ImageNet-PD dataset.
\begin{table}[ht]
\caption{LCM trained on supervised learning and evaluated on \textbf{ImageNet-PD subsets} and original ImageNet. The probability \(P\) of applying LCM in model fine-tuning. results for probability \(P\)=0 denotes standard training and reported from MPD \cite{chhipa2024m}. PD - Perspective Distorted.}
\label{tab:supervised}
\tiny
\begin{tabular}{ccccccccccc}
\hline
\multicolumn{1}{c|}{P} & \multicolumn{2}{c|}{Original ImageNet} & \multicolumn{2}{c|}{\begin{tabular}[c]{@{}c@{}} Top-view (PD-T)\end{tabular}} & \multicolumn{2}{c|}{\begin{tabular}[c]{@{}c@{}} Bottom-view (PD-B)\end{tabular}} & \multicolumn{2}{c|}{\begin{tabular}[c]{@{}c@{}} Left-view (PD-L)\end{tabular}} & \multicolumn{2}{c}{\begin{tabular}[c]{@{}c@{}} Right-view (PD-R)\end{tabular}} \\ \hline
\multicolumn{1}{c|}{} & Top1 & \multicolumn{1}{c|}{Top 5} & Top1 & \multicolumn{1}{c|}{Top 5} & Top1 & \multicolumn{1}{c|}{Top 5} & Top1 & \multicolumn{1}{c|}{Top 5} & Top1 & Top 5 \\ \hline

\multicolumn{1}{c|}{0.0} & 76.13±0.04 & \multicolumn{1}{c|}{92.86±0.01} & 63.37±0.06 & \multicolumn{1}{c|}{83.61±0.02} & 61.15±0.04 & \multicolumn{1}{c|}{81.86±0.01} & 65.20±0.03 & \multicolumn{1}{c|}{85.13±0.03} & 65.84±0.06 & 85.64±0.02 \\ \hline
\multicolumn{1}{c|}{0.2} & 75.96±0.02 & \multicolumn{1}{c|}{92.30±0.04} & 71.94±0.01 & \multicolumn{1}{c|}{90.00±0.02} & 71.88±0.03 & \multicolumn{1}{c|}{90.74±0.03} & 71.92±0.02 & \multicolumn{1}{c|}{90.78±0.02} & 71.95±0.02 & 90.95±0.02 \\
\multicolumn{1}{c|}{0.4} & 76.08±0.02 & \multicolumn{1}{c|}{92.98±0.04} & 72.44±0.02 & \multicolumn{1}{c|}{90.40±0.02} & 72.00±0.03 & \multicolumn{1}{c|}{91.02±0.02} & 72.88±0.04 & \multicolumn{1}{c|}{91.20±0.03} & 72.58±0.02 & 91.22±0.02 \\
\multicolumn{1}{c|}{0.6} & 76.18±0.04 & \multicolumn{1}{c|}{92.80±0.04} & 72.68±0.03 & \multicolumn{1}{c|}{90.52±0.02} & 72.04±0.03 & \multicolumn{1}{c|}{90.88±0.03} & 73.10±0.03 & \multicolumn{1}{c|}{91.48±0.03} & 72.76±0.04 & 91.05±0.03 \\
\multicolumn{1}{c|}{0.8} & 76.22±0.03 & \multicolumn{1}{c|}{92.96±0.03} & 72.90±0.04 & \multicolumn{1}{c|}{90.58±0.03} & 71.95±0.02 & \multicolumn{1}{c|}{91.02±0.02} & 73.22±0.02 & \multicolumn{1}{c|}{91.10±0.04} & 72.83±0.03 & 91.16±0.03 \\ 
\multicolumn{1}{l|}{1.0} & 74.30±0.02 & \multicolumn{1}{c|}{91.10±0.02} & 70.98±0.03 & \multicolumn{1}{c|}{89.90±0.02} & 71.57±0.02 & \multicolumn{1}{c|}{90.86±0.03} & 71.92±0.03 & \multicolumn{1}{c|}{91.00±0.02} & 72.10±0.03 & 90.30±0.03 \\ \hline
\end{tabular}

\end{table}
From Figure~\ref{fig:imagnet_pd_lcm_mpd_compare_supervised}, we observe that a fully supervised LCM adapted ImageNet trained model shows higher accuracy than standard ImageNet trained ResNet50 \cite{pytorch_vision} and retains similar performance with highly compute-intensive MPD \cite{chhipa2024m}. 
\begin{figure*}[h]
  \centering
  \includegraphics[width=\columnwidth]{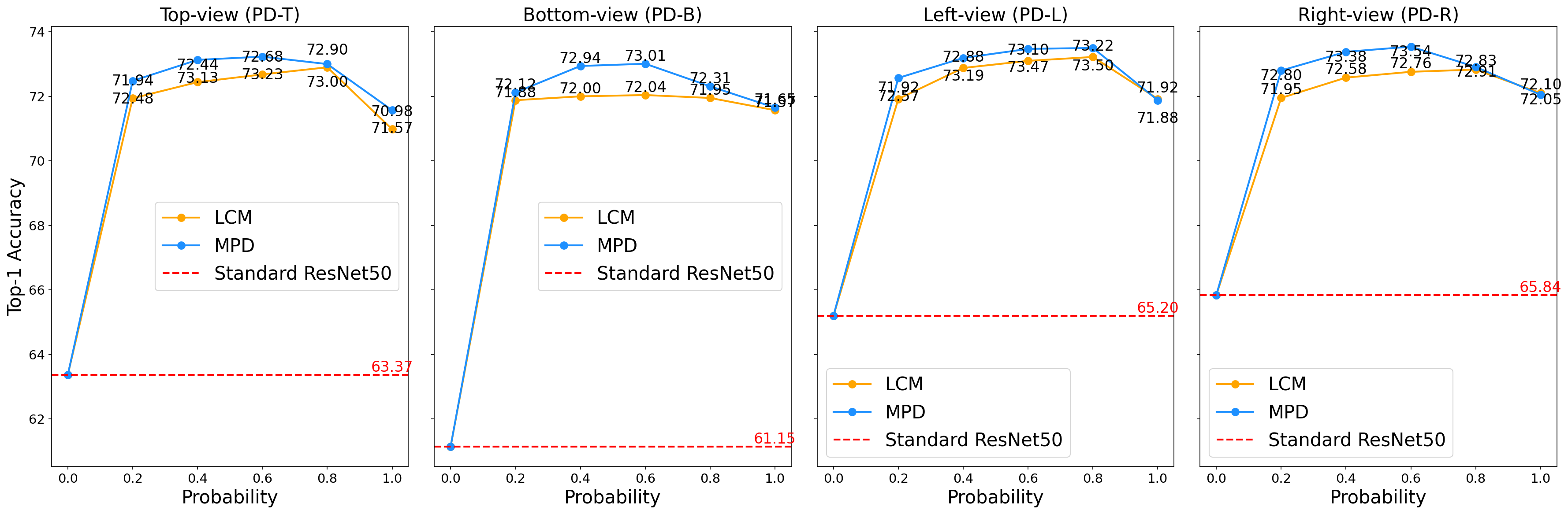}
  \caption{Comparison of LCM with MPD method \cite{chhipa2024m} across probability values. Red line shows standard ResNet50. LCM outperform ResNet50 model trained on ImageNet and matches the performance with MPD. MPD results from \cite{chhipa2024m}.} 
  \label{fig:imagnet_pd_lcm_mpd_compare_supervised}
  \vspace{-4mm}
\end{figure*}
We also compare the LCM with other popular data augmentation methods in Table \ref{tab:imagenetaugs} on ImageNet-PD, where LCM demonstrates a significant performance lead over other methods in supervised approach.
\begin{table}[htp]

\centering
\tiny
\caption{Comparisons with other augmentation methods.}
\label{tab:imagenetaugs}

\begin{tabular}{c|c|cccc}
\hline
\multirow{3}{*}{Method}                               & \multirow{2}{*}{\begin{tabular}[c]{@{}c@{}}Original \\ ImageNet\end{tabular}} & \multicolumn{4}{c}{ImageNet PD}                                                                                                                                                                                                                                                                                       \\ \cline{3-6}
                                                      &                                                                              & \multicolumn{1}{c|}{\begin{tabular}[c]{@{}c@{}}Top-view \\ (PD-T)\end{tabular}} & \multicolumn{1}{c|}{\begin{tabular}[c]{@{}c@{}}Bottom-view\\  (PD-B)\end{tabular}} & \multicolumn{1}{c|}{\begin{tabular}[c]{@{}c@{}}Left-view \\ (PD-L)\end{tabular}} & \begin{tabular}[c]{@{}c@{}}Right-view \\ (PD-R)\end{tabular} \\ \cline{2-6} 
                                                      & Top1                                                                         & Top1                                                                       & Top1                                                                          & Top1                                                                        & Top1                                                    \\ \hline
standard ResNet50                                     & 76.13                                                                        & 63.37                                                                     & 61.15                                                                        & 65.20                                                                      & 65.84                                                  \\
+Mixup \cite{zhang2018mixup}                                               & 77.46                                                                        & 65.46                                                                     & 66.79                                                                        & 68.02                                                                      & 68.43                                                  \\
+Cutout \cite{devries2017improved}                                              & 77.08                                                                        & 64.27                                                                     & 62.04                                                                        & 65.45                                                                      & 65.61                                                  \\
+AugMix \cite{hendrycksaugmix}                                               & 77.53                                                                        & 64.12                                                                     & 62.90                                                                        & 65.95                                                                      & 66.49                                                  \\
+Pixmix \cite{hendrycks2022pixmix}                                              & 77.37                                                                        & 65.52                                                                     & 64.76                                                                        & 67.26                                                                      & 67.56                                                  \\ \hline
LCM                                                  & 76.22                                                                        & \textbf{72.90}                                                            & \textbf{71.95}                                                               & \textbf{73.22}                                                             & \textbf{72.83}                                         \\
\begin{tabular}[c]{@{}c@{}}LCM + SimCLR\end{tabular} & 76.60                                                                        & \textbf{73.50}                                                            & \textbf{73.40}                                                               & \textbf{73.44}                                                             & \textbf{73.60}                                         \\ \hline
\end{tabular}

\vspace{-3mm}
\end{table}


\subsubsection{Self-supervised Representation learning}
We investigate the robustness of LCM in the SimCLR self-supervised learning method, where pretraining is performed with LCM added as an augmentation with a probability of 0.8. Fine-tuning is then conducted following the protocol outlined in Section \ref{sec:sup}. Results for the self-supervised representation are provided in Table~\ref{tab:simclr_finetune}. As shown in Figure~\ref{fig:imagnet_pd_lcm_mpd_compare_simclr}, LCM-trained ImageNet weights, when fully fine-tuned, outperform the standard ImageNet training and demonstrate competitive performance compared to the highly compute-intensive MPD method. Similar to the supervised approach, self-supervised trained LCM models also showcase significant performance improvement compared to other popular data augmentations (Table \ref{tab:imagenetaugs}). 
\begin{table}[h]
\vspace{-8mm}
\caption{LCM trained with SimCLR\cite{chen2020simple} self-supervised pretraining with probability $0.8$ and evaluated on \textbf{ImageNet-PD subsets} and original ImageNet. The probability \(P\) of applying LCM in model fine-tuning stage. Results for probability \(P\) denotes standard training from MPD \cite{chhipa2024m}. PD - Perspective Distorted.}
\centering
\tiny
\label{tab:simclr_finetune}
\begin{tabular}{ccccccccccc}
\hline
\multicolumn{1}{c|}{P} & \multicolumn{2}{c|}{Original ImageNet} & \multicolumn{2}{c|}{\begin{tabular}[c]{@{}c@{}} Top-view (PD-T)\end{tabular}} & \multicolumn{2}{c|}{\begin{tabular}[c]{@{}c@{}} Bottom-view (PD-B)\end{tabular}} & \multicolumn{2}{c|}{\begin{tabular}[c]{@{}c@{}} Left-view (PD-L)\end{tabular}} & \multicolumn{2}{c}{\begin{tabular}[c]{@{}c@{}} Right-view (PD-R)\end{tabular}} \\ \hline
\multicolumn{1}{c|}{} & Top1 & \multicolumn{1}{c|}{Top 5} & Top1 & \multicolumn{1}{c|}{Top 5} & Top1 & \multicolumn{1}{c|}{Top 5} & Top1 & \multicolumn{1}{c|}{Top 5} & Top1 & Top 5 \\ \hline
\multicolumn{1}{c|}{0.2}                & 76.44±0.03 & \multicolumn{1}{c|}{93.42±0.01} & 72.25±0.03                              & \multicolumn{1}{c|}{90.90±0.01}                              & 72.36±0.04                                & \multicolumn{1}{c|}{90.82±0.03}                               & 72.36±0.03                               & \multicolumn{1}{c|}{91.26±0.01}                              & 72.60±0.02                                         & 91.22±0.02                                         \\
\multicolumn{1}{c|}{0.4}                & 76.60±0.04 & \multicolumn{1}{c|}{93.12±0.03} & 73.12±0.02                              & \multicolumn{1}{c|}{91.20±0.02}                              & 73.12±0.02                                & \multicolumn{1}{c|}{91.60±0.03}                               & 73.48±0.02                               & \multicolumn{1}{c|}{91.60±0.02}                              & 73.42±0.03                                         & 91.80±0.02                                         \\
\multicolumn{1}{c|}{0.6}                & 76.08±0.02 & \multicolumn{1}{c|}{93.20±0.04} & 73.10±0.04                              & \multicolumn{1}{c|}{91.80±0.04}                              & 73.20±0.04                                & \multicolumn{1}{c|}{91.40±0.03}                               & 73.42±0.04                               & \multicolumn{1}{c|}{91.55±0.01}                              & 73.52±0.02                                         & 91.62±0.04                                         \\
\multicolumn{1}{c|}{0.8}                & 76.60±0.02 & \multicolumn{1}{c|}{92.96±0.04} & 73.50±0.02                              & \multicolumn{1}{c|}{91.22±0.02}                              & 73.40±0.03                                & \multicolumn{1}{c|}{91.76±0.04}                               & 73.44±0.03                               & \multicolumn{1}{c|}{91.78±0.03}                              & 73.60±0.01                                         & 91.66±0.04                                         \\
\multicolumn{1}{l|}{1.0}                & 74.38±0.02 & \multicolumn{1}{c|}{91.35±0.02} & 71.45±0.02                              & \multicolumn{1}{c|}{90.20±0.02}                              & 71.55±0.02                                & \multicolumn{1}{c|}{90.40±0.03}                               & 71.88±0.02                               & \multicolumn{1}{c|}{90.56±0.04}                              & 72.00±0.04                                         & 90.64±0.03                                         \\ \hline
\end{tabular}
\vspace{-2mm}
\end{table}

\begin{figure*}[!t]
  \centering
  \includegraphics[width=\columnwidth]{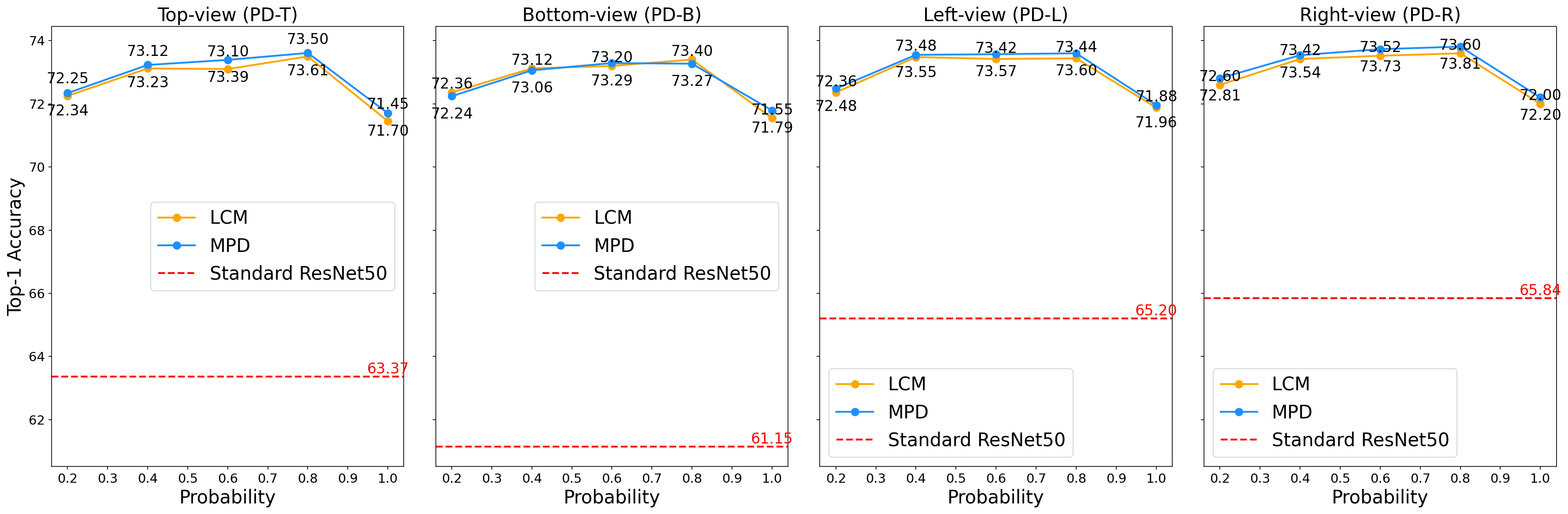}
  \caption{Comparison of self-supervised trained LCM+SimCLR with MPD+SimCLR method \cite{chhipa2024m} across probability values. Red line shows standard ResNet50. LCM outperform ResNet50 model trained on ImageNet and matches the performance with MPD. MPD results are reported from \cite{chhipa2024m}.} 
  \label{fig:imagnet_pd_lcm_mpd_compare_simclr}
  \vspace{-6mm}
\end{figure*}

\begin{table}[]
\caption{Linear evaluation of SimCLR \cite{chen2020simple} contrastive self-supervised models on \textbf{ImageNet-PD sub-sets}. Self-supervised pretraining was performed on ImageNet dataset with batch size of 512 on linear learning rate for 100 epochs, refer SimCLR\cite{chen2020simple}. The probability of LCM is set to 0.8 for self-supervised pre-training and linear evaluation. ResNet50 is backbone. results for standard SimCLR and SimcLR+MPD are reported from MPD \cite{chhipa2024m}.}
\centering
\label{tab:simclr_linear}
\tiny
\begin{tabular}{c|c|cccc}
\hline
Model          & Original ImageNet & \begin{tabular}[c]{@{}c@{}} Top-view (PD-T)\end{tabular} & \begin{tabular}[c]{@{}c@{}} Bottom-view (PD-B)\end{tabular} & \begin{tabular}[c]{@{}c@{}} Left-view (PD-L)\end{tabular} & \begin{tabular}[c]{@{}c@{}} Right-view (PD-R)\end{tabular} \\ \hline
standard SimCLR & 60.14              & 34.22                                                                        & 32.87                                                                           & 36.33                                                                         & 36.89                                                                         \\ \hline
SimCLR+MPD       & 60.02±0.03              & 50.63±0.03                                                                        & 49.65±0.03                                                                           & 50.41±0.03                                                                         & 50.64±0.05                                                                        \\ \hline
SimCLR+LCM       & \textbf{60.30±0.02}              & \textbf{51.04±0.03}                                                                        & \textbf{50.00±0.03}                                                                           & 49.88±0.04                                                                         & 49.78±0.02                                                                        \\ \hline

\end{tabular}
\end{table}

\begin{table*}[h]
\caption{Linear evaluation of knowledge distillation based self-supervised method DINO \cite{caron2021emerging} on \textbf{ImageNet-PD sub-sets}. Self-supervised pretraining was performed on ImageNet dataset with batch size of 512 on linear learning rate for 100 epochs, refer DINO\cite{caron2021emerging}. The probability of LCM is set to 0.8 for self-supervised pre-training and linear evaluation. Results for standard DINO and DINO+MPD are reported from MPD \cite{chhipa2024m}. ViT-small transformer \cite{dosovitskiy2020image} is common backbone. 
}
\scriptsize
\centering
\label{tab:dino_linear}
\tiny
\begin{tabular}{c|c|cccc}
\hline
Model          & Original ImageNet & \begin{tabular}[c]{@{}c@{}} Top-view (PD-T)\end{tabular} & \begin{tabular}[c]{@{}c@{}} Bottom-view (PD-B)\end{tabular} & \begin{tabular}[c]{@{}c@{}} Left-view (PD-L)\end{tabular} & \begin{tabular}[c]{@{}c@{}} Right-view (PD-R)\end{tabular} \\ \hline
standard DINO & 74.00                   & 46.31                                                                             & 46.05                                                                                & 46.15                                                                              & 45.98                                                                               \\ \hline
DINO+MPD & 72.36±0.01              & 60.72±0.02                                                                        & 60.86±0.02                                                                           & 60.63±0.02                                                                         & 60.58±0.02                                                                          \\ \hline
DINO+LCM & \textbf{72.42±0.02}              & \textbf{61.10±0.02}                                                                        & \textbf{61.06±0.03}                                                                           & 59.85±0.03                                                                         & 60.02±0.01                                                                          \\ \hline
\end{tabular}
\vspace{-3mm}
\end{table*}
We also investigate the linear evaluation performance of self-supervised methods SimCLR~\cite{chen2020simple} and DINO~\cite{caron2021emerging}. Similar to SimCLR, DINO is adapted with LCM as an augmentation, and pretraining is performed, followed by linear evaluation. LCM outperforms the standard original SimCLR~\cite{chen2020simple} and DINO~\cite{caron2021emerging} when integrated and linearly evaluated (see Figure \ref{fig:imagnet_pd_lcm_mpd_compare_dino_simclr}). Detailed results are shown in Tables \ref{tab:simclr_linear} and \ref{tab:dino_linear}. Notably, while LCM-adapted self-supervised models outperform ImageNet-PD subsets, they retain performance on the original ImageNet.
\begin{figure*}[!t]
  \centering
  \includegraphics[width=0.6\columnwidth]{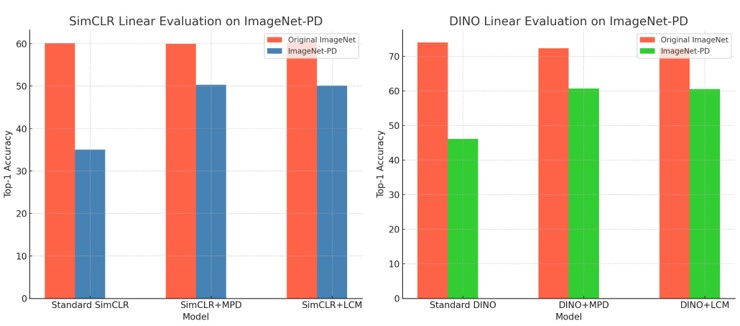}
  \vspace{-2mm}
  \caption{Comparison of linear evaluation performance on self-supervised trained  models. (Left): original SimCLR \cite{chen2020simple}, LCM+SimCLR and MPD+SimCLR\cite{chhipa2024m} on original ImageNet and ImageNet-PD subsets. Performance on ImageNet-PD is average over all four subsets. (Right): Comparing original DINO \cite{caron2021emerging}, LCM+DINO and MPD+DINO \cite{chhipa2024m} on original ImageNet and ImageNet-PD subsets. Performance on ImageNet-PD is average over all subsets.} 
  \label{fig:imagnet_pd_lcm_mpd_compare_dino_simclr}
  \vspace{-8mm}  
\end{figure*}
\subsection{Qualitative Analysis}
In this section, we present qualitative results in Figure~\ref{fig:imagnet_e_cam} to assess the robustness of our proposed formulation. We have plotted the Grad-CAM for the LCM-trained SimCLR method and attention maps for the LCM-trained DINO method, comparing these with the Möbius and LCM methods. The Grad-CAM results for the LCM-trained SimCLR method show clearly defined attention regions, indicating effective feature localization. Similarly, the attention maps for the LCM-trained DINO method demonstrate precise focus areas. These visualizations highlight that LCM, with fewer parameters, produces comparative qualitative outcomes to MPD ~\cite{chhipa2024m}.
\begin{figure*}[!t]
  \centering
  \includegraphics[width=0.8\columnwidth]{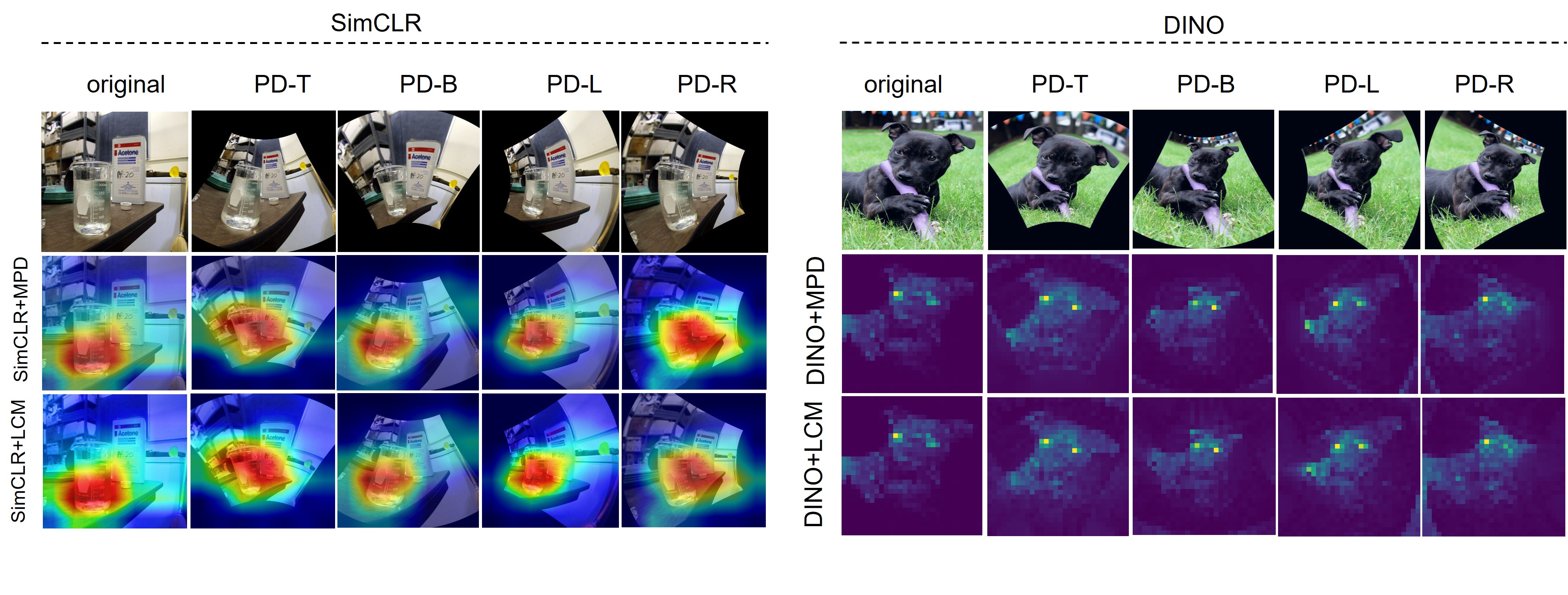}
  \vspace{-3mm}
  \caption{Activation maps: (Left) \textit{beaker} example (n02815834) in ImageNet-PD subsets comparing MPD and LCM in self-supervised learning method SimCLR \cite{chen2020simple}. (Right) \textit{Staffordshire bullterrier} example (n02093256) in ImageNet-PD subsets comparing MPD and LCM in self-supervised learning method DINO \cite{caron2021emerging}. Reported result for MPD is used and same example were used for comparison.} 
  \label{fig:imagnet_e_cam}
  \vspace{-3mm}
\end{figure*}

\subsection{ImageNet-E}
ImageNet-Editing, commonly known as ImageNet-E~\cite{li2023imagenet}, is a comprehensive dataset for benchmarking robustness against various object attributes, including perspective distortion. We extensively evaluated the LCM-trained models, both supervised and self-supervised, and compared their performance with other methods. The comparison are presented in Figure~\ref{fig:imagnet_e_extended} and detailed results are in Table~\ref{tab:imagenet-e_detailed} in suppl. material. 
The main observation is a significant boost in accuracy when LCM is incorporated into the training procedure, benefiting both background and object size changes in the dataset's images. Figure~\ref{fig:imagnet_e_extended} compares our proposed LCM-based technique with the existing Möbius-based MPD~\cite{chhipa2024m} method, demonstrating that the LCM-trained models outperform standard trained models overall the subsets and offers competitive performance compared with MPD \cite{chhipa2024m} with fewer parameters and less compute-intensive, showing resilient for robustness.
\begin{figure*}[t]
  \centering
  \includegraphics[width=1.\columnwidth]{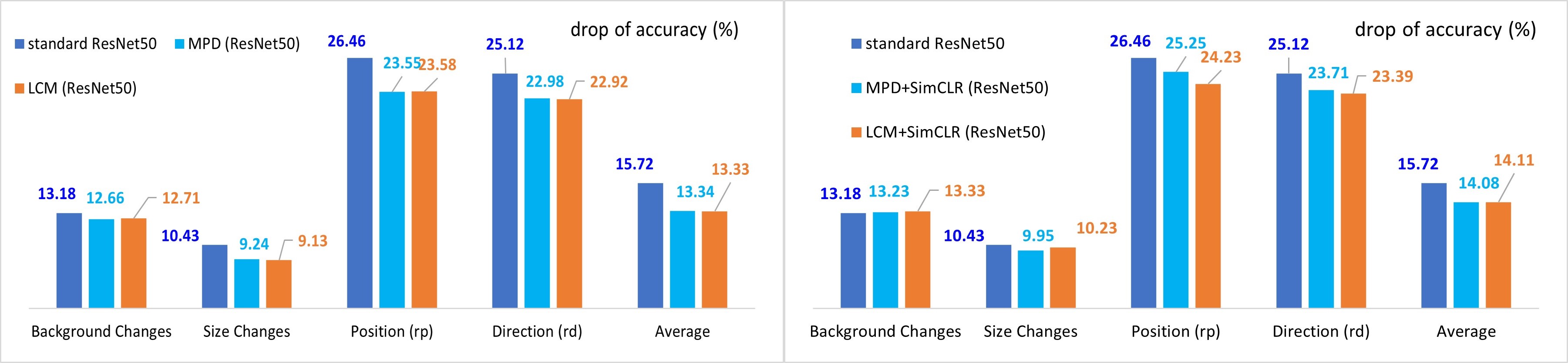}
  \caption{ImageNet-E: Drop of Top1 accuracy under background changes, size changes, random position (rp), random direction (rd), and average over 11 subsets. Lower is better. The mean is reported for size and background-related subsets in ImageNet-E\cite{li2023imagenet}. \textit{Average} reports the mean of all subsets. (Left): compares LCM with MPD \cite{chhipa2024m} in supervised approach. (Right): Compares LCM with MPD \cite{chhipa2024m} in SimCLR self-supervised learning approach.} 
  \label{fig:imagnet_e_extended}
  \vspace{-4mm}
\end{figure*}

\subsection{ImageNet-X}
ImageNet-X~\cite{idrissi2022imagenet} contains human annotations identifying failure types in the widely-used ImageNet dataset. These annotations label distinguishing object factors such as pose, size, color, lighting, occlusions, and co-occurrences for each image in the validation set and a random subset of 12,000 training samples. We evaluated LCM-trained ImageNet models on ImageNet-X, and our findings are reported in Figure~\ref{fig:imagnet_x_all} and detailed results are in Table~\ref{tab:imagenet_x_all} in suppl. material. 
Figure~\ref{fig:imagnet_x_all} illustrates a reduction in the error ratio towards the optimal value of 1.0, decreasing from 1.55 to 1.43 in fully supervised experiments. A similar trend is observed for the self-supervised method, indicating increased robustness.

\begin{figure*}[t]
  \centering
  \includegraphics[width=1.\columnwidth]{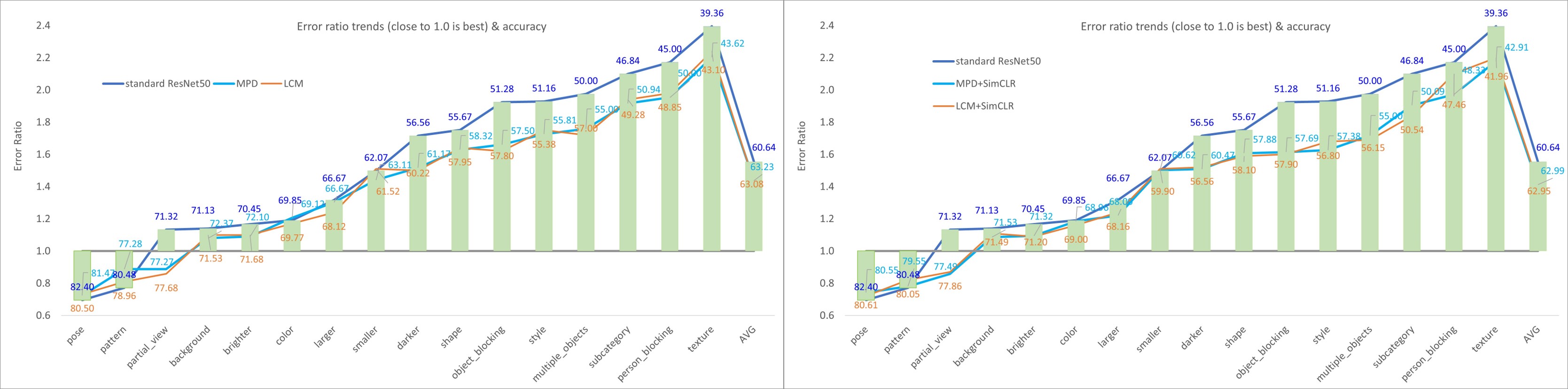}
  \caption{ImageNet-X\cite{idrissi2022imagenet}: Compares error ratio and accuracy of 16 factors for standard ResNet50 model with LCM and MPD. Results for standard ResNet50 ImageNet trained model and MPD trained model are reported from MPD \cite{chhipa2024m}. Error ratio close to 1.0 shows highest robustness \cite{idrissi2022imagenet}. (Left): Compares LCM with standard ResNet50 and MPD in supervised learning. (Right): Compares LCM with standard ResNet50 and MPD in SimCLR \cite{chen2020simple} self-supervised learning.}
  \label{fig:imagnet_x_all}
  \vspace{-5mm}
\end{figure*}

\subsection{LCM Compute Efficiency}
To demonstrate the computational efficiency of the proposed LCM method over the existing Möbius Transform-based MPD method~\cite{chhipa2024m}, we performed a FLOP analysis, with the details presented in Table~\ref{tab:flops_comparison}. Notably, LCM requires fewer floating point operations per second (FLOPs) than the MPD method. The reduced FLOPs are primarily due to LCM's less complex arithmetic operations compared to Möbius transformations. Our benchmarking of the compute time for image transformation operations using LCM and MPD on a CPU-only setup further underscores the efficiency of the LCM method. The results show that MPD requires 0.24 seconds per image per transform, while LCM, with its streamlined approach, needs only 0.22 seconds per image per transform when using a single-core CPU. Table \ref {tab:compute_efficiency_power} shows the compute efficiency of LCM over MPD.
\begin{table}[]
\vspace{-9mm}
\tiny
\centering
\caption{Computation efficiency of LCM}
\label{tab:flops_comparison}
\begin{tabular}{c|c|c|c}
\hline
\textbf{Operation}                          & \textbf{FLOPs per Pixel} & \textbf{MPD}                         & \textbf{LCM}                         \\ \hline
Meshgrid Creation                           & 2                        & $2 \times \text{H} \times \text{W}$  & $2 \times \text{H} \times \text{W}$  \\
Complex Arithmetic for Transformation       & 14                       & $14 \times \text{H} \times \text{W}$ & 0                                    \\
Logarithmic Mapping                         & 4                        & 0                                    & $4 \times \text{H} \times \text{W}$  \\
Scaling and Conversion to Image Coordinates & 6                        & 0                                    & $6 \times \text{H} \times \text{W}$  \\
Grid Sampling (Bilinear Interpolation)      & 7                        & $7 \times \text{H} \times \text{W}$  & $7 \times \text{H} \times \text{W}$  \\ \hline
\textbf{Total FLOPs}                        &                          & $23 \times \text{H} \times \text{W}$ & $19 \times \text{H} \times \text{W}$ \\ \hline
\end{tabular}
\vspace{-7mm}
\end{table}
\begin{table}[h]
\vspace{-9mm}
\centering
\scriptsize
\caption{Compute efficiency and power consumption between MPD and LCM.}
\begin{tabular}{c|c|c}
\hline
\textbf{Metric} & \textbf{MPD} & \textbf{LCM} \\ \hline
\textbf{Time/epoch} & 0.1667 hours & 0.1528 hours \\ 
\textbf{Complete training} & 16.67 hours & 15.28 hours \\ 
\textbf{Saved hours} & - & \textbf{1.39 hrs (8.33\%)} \\ \hline
\textbf{Power/epoch} & 400.08 kWh & 366.72 kWh \\ 
\textbf{Total power} & 4000.8 kWh & 3667.2 kWh \\ 
\textbf{Saved power} & - &\textbf{ 333.6 kWh} \\ \hline
\end{tabular}
\label{tab:compute_efficiency_power}
\vspace{-0.5cm}
\end{table}
\begin{figure}[h]
  \centering
  \begin{minipage}{0.45\columnwidth}
    \centering
    \includegraphics[width=\linewidth]{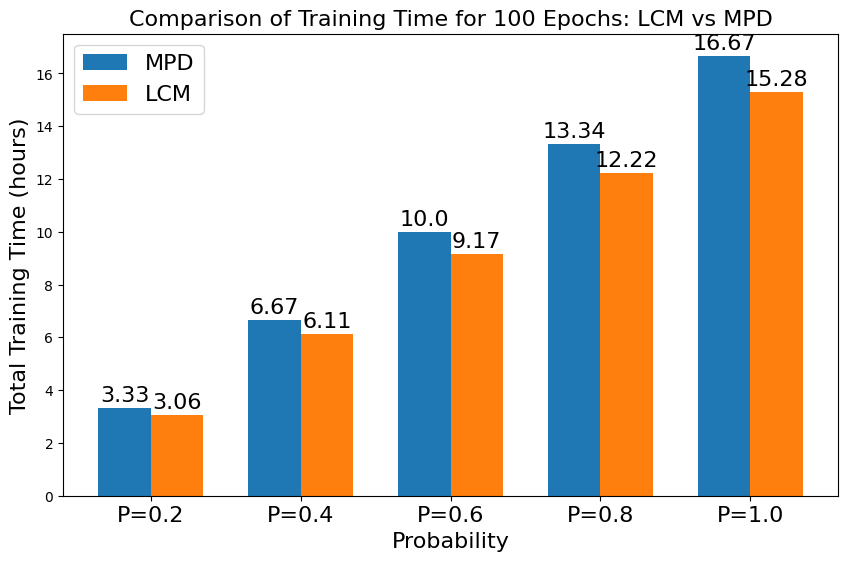}
    \vspace{-6mm}
    \subcaption{}
    \label{fig:hours}
  \end{minipage}%
  \hfill
  \begin{minipage}{0.45\columnwidth}
    \centering
    \includegraphics[width=\linewidth]{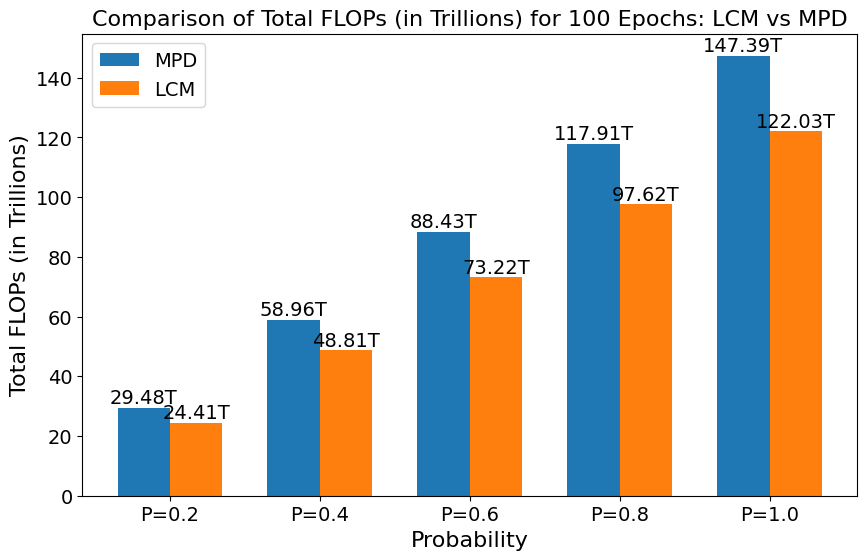}
    \vspace{-6mm}
    \subcaption{}
    \label{fig:flops}
  \end{minipage}
  \vspace{-5mm}
  \caption{Comparison of LCM and MPD for (a) training time efficiency and (b) FLOPs across probability ranges.}
  \label{fig:combined}
  \vspace{-0.5cm}
\end{figure}


To compare the compute efficiency of LCM and MPD methods, we performed a detailed compute analysis with our pretraining experiments using SimCLR self-supervised pretraining on a ResNet50 model with the ImageNet dataset. The model trains for 100 epochs with an input size of 224x224 and a batch size of 512, distributed across 8 NVIDIA H100 GPUs. Both methods show nearly identical per-image processing times (LCM: 0.22s, MPD: 0.24s). LCM demonstrates significant efficiency, saving 1.39 hours (8\% increase) and 333.6 kWh over 100 epochs (Table \ref{tab:compute_efficiency_power} and Fig. \ref{fig:hours}). LCM reduces FLOPs upto 17\% compared to MPD (Fig. \ref{fig:flops}). If method-specific time is discounted, DINO will show a similar trend in compute efficiency.
\section{Person Re-identification} 
Following MPD \cite{chhipa2024m}, we integrated LCM with CLIP-ReIdent \cite{habel2022clip}, a CLIP-based contrastive image-image pre-training for person re-identification on DeepSportRadar dataset \cite{van2022deepsportradar} having video frames of varying poses and camera angles of players (Fig. \ref{fig:clip_reident}). LCM-Clip-ReIdent models outperform original Clip-ReIdent and improve mean-average precision (mAP) across backbone  ViT-L-14 and ResNet50x16 (Table \ref{tab:lcm_clipreid})  for both approaches, with and without re-ranking. 
\begin{figure}[h]
  \centering
   \vspace{-5mm}
  \includegraphics[width=0.8\columnwidth]{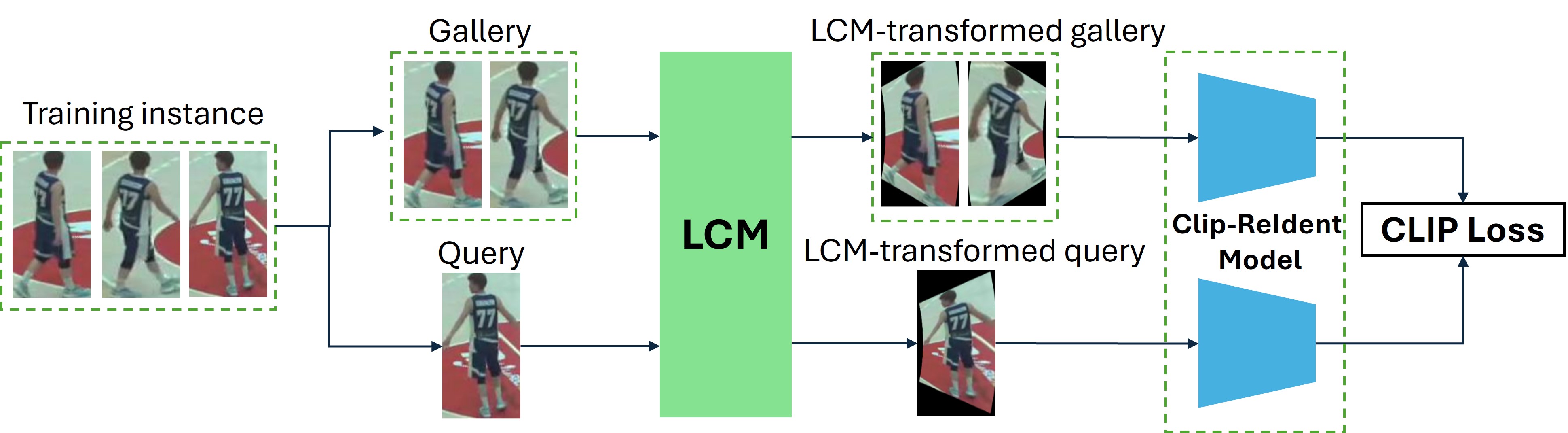}
  \caption{LCM-Clip-ReIdent: LCM transforms the input gallery and query during training of Clip-ReIdent for person re-id. LCM probability is set to 0.1.} 
  \label{fig:clip_reident}
  \vspace{-8mm}
\end{figure}

\begin{table}[h]
\vspace{-10mm}
\centering
\scriptsize
\caption{LCM improves Clip-ReIdent model.}
\begin{tabular}{cccc}
\hline
\textbf{Method} &
  \textbf{Encoder} &
  \textbf{\begin{tabular}[c]{@{}c@{}}mAP\\ (w/o Re-ranking)\end{tabular}} &
  \textbf{\begin{tabular}[c]{@{}c@{}}mAP\\ (with Re-ranking)\end{tabular}} \\ \hline
Baseline            & \multirow{3}{*}{ViT-L-14}    & 72.70          & -              \\
Clip-ReIdent        &                              & 96.90          & 98.20          \\
MPD-Clip-ReIdent &                              & 97.02 & 98.30 \\
LCM-Clip-ReIdent &                              & \textbf{97.16} & \textbf{98.42} \\\hline
Clip-ReIdent        & \multirow{2}{*}{ResNet50x16} & 88.50          & 94.90          \\
MPD-Clip-ReIdent &                              & \textbf{91.95} & 97.50 \\
LCM-Clip-ReIdent &                              & 91.90 & \textbf{97.62} \\\hline
\end{tabular}
\label{tab:lcm_clipreid}
\vspace{-0.3cm}
\end{table}

\vspace{-4mm}
\section{Conclusion and Future Work}
In this paper, we present the Log Conformal Maps (LCM) transform to efficiently handle perspective distortion, enhancing model robustness. Our analysis, backed by experiments, shows LCM's effectiveness in modeling perspective distortion. LCM-adapted models in both supervised and self-supervised settings surpass standard models and achieve state-of-the-art performance. Given its relevance to tasks like crowd counting and object detection, future directions include exploring LCM's application to diverse computer vision tasks and its computational advantages. We aim to inspire further research into using logarithmic conformal mapping for addressing geometric challenges in computer vision.\\
\textbf{Acknowledgment:} The authors thank Sumit Rakesh, Luleå University of Technology, for his support with the Lotty Bruzelius cluster. We also thank the National Supercomputer Centre at Linköping University for the Berzelius supercomputing, supported by the Knut and Alice Wallenberg Foundation.
\clearpage  

%
%
\bibliographystyle{splncs04}
\bibliography{main}

\begin{thebibliography}{10}
\providecommand{\url}[1]{\texttt{#1}}
\providecommand{\urlprefix}{URL }
\providecommand{\doi}[1]{https://doi.org/#1}

\bibitem{arnold2008mobius}
Arnold, D.N., Rogness, J.P.: M{\"o}bius transformations revealed. Notices of the American Mathematical Society  \textbf{55}(10),  1226--1231 (2008)

\bibitem{bottger2006complex}
Bottger, J., Balzer, M., Deussen, O.: Complex logarithmic views for small details in large contexts. IEEE Transactions on Visualization and Computer Graphics  \textbf{12}(5),  845--852 (2006)

\bibitem{caron2021emerging}
Caron, M., Touvron, H., Misra, I., J{\'e}gou, H., Mairal, J., Bojanowski, P., Joulin, A.: Emerging properties in self-supervised vision transformers. In: Proceedings of the IEEE/CVF International Conference on Computer Vision. pp. 9650--9660 (2021)

\bibitem{chen2020simple}
Chen, T., Kornblith, S., Norouzi, M., Hinton, G.: A simple framework for contrastive learning of visual representations. In: Proceedings of the International Conference on Machine Learning. pp. 1597--1607. PMLR (2020)

\bibitem{chhipa2024m}
Chhipa, P.C., Chippa, M.S., De, K., Saini, R., Liwicki, M., Shah, M.: Möbius transform for mitigating perspective distortions in representation learning. In: Proceedings of the European Conference on Computer Vision. Cham: Springer Nature Switzerland, 2024. (2024)

\bibitem{cho2021camera}
Cho, H., Cho, Y., Yu, J., Kim, J.: Camera distortion-aware 3d human pose estimation in video with optimization-based meta-learning. In: Proceedings of the IEEE/CVF International Conference on Computer Vision. pp. 11169--11178 (2021)

\bibitem{deng2009imagenet}
Deng, J., Dong, W., Socher, R., Li, L.J., Li, K., Fei-Fei, L.: Imagenet: A large-scale hierarchical image database. In: 2009 IEEE conference on computer vision and pattern recognition. pp. 248--255. Ieee (2009)

\bibitem{devries2017improved}
DeVries, T., Taylor, G.W.: Improved regularization of convolutional neural networks with cutout. arXiv preprint arXiv:1708.04552  (2017)

\bibitem{dosovitskiy2020image}
Dosovitskiy, A., Beyer, L., Kolesnikov, A., Weissenborn, D., Zhai, X., Unterthiner, T., Dehghani, M., Minderer, M., Heigold, G., Gelly, S., et~al.: An image is worth 16x16 words: Transformers for image recognition at scale. In: Proceedings of the International Conference on Learning Representations (2020)

\bibitem{habel2022clip}
Habel, K., Deuser, F., Oswald, N.: Clip-reident: Contrastive training for player re-identification. In: Proceedings of the 5th International ACM Workshop on Multimedia Content Analysis in Sports. pp. 129--135 (2022)

\bibitem{he2016deep}
He, K., Zhang, X., Ren, S., Sun, J.: Deep residual learning for image recognition. In: Proceedings of the IEEE Conference on Computer Vision and Pattern Recognition. pp. 770--778 (2016)

\bibitem{hendrycksaugmix}
Hendrycks, D., Mu, N., Cubuk, E.D., Zoph, B., Gilmer, J., Lakshminarayanan, B.: Augmix: A simple data processing method to improve robustness and uncertainty. In: International Conference on Learning Representations

\bibitem{hendrycks2022pixmix}
Hendrycks, D., Zou, A., Mazeika, M., Tang, L., Li, B., Song, D., Steinhardt, J.: Pixmix: Dreamlike pictures comprehensively improve safety measures. In: Proceedings of the IEEE/CVF Conference on Computer Vision and Pattern Recognition. pp. 16783--16792 (2022)

\bibitem{idrissi2022imagenet}
Idrissi, B.Y., Bouchacourt, D., Balestriero, R., Evtimov, I., Hazirbas, C., Ballas, N., Vincent, P., Drozdzal, M., Lopez-Paz, D., Ibrahim, M.: Imagenet-x: Understanding model mistakes with factor of variation annotations. Proceedings of the International Conference on Learning Representations  (2023)

\bibitem{jin2023perspective}
Jin, L., Zhang, J., Hold-Geoffroy, Y., Wang, O., Blackburn-Matzen, K., Sticha, M., Fouhey, D.F.: Perspective fields for single image camera calibration. In: Proceedings of the IEEE/CVF Conference on Computer Vision and Pattern Recognition. pp. 17307--17316 (2023)

\bibitem{kocabas2021spec}
Kocabas, M., Huang, C.H.P., Tesch, J., M{\"u}ller, L., Hilliges, O., Black, M.J.: Spec: Seeing people in the wild with an estimated camera. In: Proceedings of the IEEE/CVF International Conference on Computer Vision. pp. 11035--11045 (2021)

\bibitem{li2023imagenet}
Li, X., Chen, Y., Zhu, Y., Wang, S., Zhang, R., Xue, H.: Imagenet-e: Benchmarking neural network robustness via attribute editing. In: Proceedings of the IEEE/CVF Conference on Computer Vision and Pattern Recognition. pp. 20371--20381 (2023)

\bibitem{li2019blind}
Li, X., Zhang, B., Sander, P.V., Liao, J.: Blind geometric distortion correction on images through deep learning. In: Proceedings of the IEEE/CVF Conference on Computer Vision and Pattern Recognition. pp. 4855--4864 (2019)

\bibitem{pytorch_vision}
PyTorch: Vision: Datasets, transforms and models specific to computer vision (2023), \url{https://github.com/pytorch/vision/tree/main/references/classification}, original repository for PyTorch Vision Reference Scripts, ACCESSED: August, 1, 2023

\bibitem{rahman2011efficient}
Rahman, T., Krouglicof, N.: An efficient camera calibration technique offering robustness and accuracy over a wide range of lens distortion. IEEE Transactions on Image Processing  \textbf{21}(2),  626--637 (2011)

\bibitem{tan2021practical}
Tan, J., Zhao, S., Xiong, P., Liu, J., Fan, H., Liu, S.: Practical wide-angle portraits correction with deep structured models. In: Proceedings of the IEEE/CVF Conference on Computer Vision and Pattern Recognition. pp. 3498--3506 (2021)

\bibitem{vadapally2009image}
Vadapally, B.K., Rahman, Z.u.: Image registration using conformal log polar mapping. In: Visual Information Processing XVIII. vol.~7341, pp. 118--128. SPIE (2009)

\bibitem{van2022deepsportradar}
Van~Zandycke, G., Somers, V., Istasse, M., Don, C.D., Zambrano, D.: Deepsportradar-v1: Computer vision dataset for sports understanding with high quality annotations. In: Proceedings of the 5th International ACM Workshop on Multimedia Content Analysis in Sports. pp.~1--8 (2022)

\bibitem{wang2023zolly}
Wang, W., Ge, Y., Mei, H., Cai, Z., Sun, Q., Wang, Y., Shen, C., Yang, L., Komura, T.: Zolly: Zoom focal length correctly for perspective-distorted human mesh reconstruction. In: Proceedings of the IEEE/CVF International Conference on Computer Vision. pp. 3925--3935 (2023)

\bibitem{yang2023innovating}
Yang, S., Lin, C., Liao, K., Zhao, Y.: Innovating real fisheye image correction with dual diffusion architecture. In: Proceedings of the IEEE/CVF International Conference on Computer Vision. pp. 12699--12708 (2023)

\bibitem{yin2018fisheyerecnet}
Yin, X., Wang, X., Yu, J., Zhang, M., Fua, P., Tao, D.: Fisheyerecnet: A multi-context collaborative deep network for fisheye image rectification. In: Proceedings of the European Conference on Computer Vision. pp. 469--484 (2018)

\bibitem{yu2021pcls}
Yu, F., Salzmann, M., Fua, P., Rhodin, H.: Pcls: Geometry-aware neural reconstruction of 3d pose with perspective crop layers. In: Proceedings of the IEEE/CVF Conference on Computer Vision and Pattern Recognition. pp. 9064--9073 (2021)

\bibitem{zhang2018mixup}
Zhang, H., Cisse, M., Dauphin, Y.N., Lopez-Paz, D.: mixup: Beyond empirical risk minimization. In: International Conference on Learning Representations (2018)

\bibitem{zhao2019learning}
Zhao, Y., Huang, Z., Li, T., Chen, W., LeGendre, C., Ren, X., Shapiro, A., Li, H.: Learning perspective undistortion of portraits. In: Proceedings of the IEEE/CVF International Conference on Computer Vision. pp. 7849--7859 (2019)

\end{thebibliography}

\newpage
\section{Supplementary Material}
\subsection{ImageNet-E}
In Table~\ref{tab:imagenet-e_detailed}, we report both the drop in accuracy and the absolute accuracy for the fully supervised ResNet-50 network and the self-supervised SimCLR~\cite{chen2020simple} method.

\begin{table}[h]
\tiny
\centering
\caption{ImageNet-E: (a) Drop of Top1 accuracy (Lower is better) and (b) Absolute Accuracy (Higher is better) under background changes, size changes, random position (rp), random direction (rd) categories. Results for ResNet50 are reported from MPD \cite{chhipa2024m}.}
\label{tab:imagenet-e_detailed}
\tiny
\begin{tabular}{ccccccccccccc}
\multicolumn{13}{c}{{\color[HTML]{000000} \textbf{(a) Drop of Accuracy}}}                                                                                                                               \\ \hline
\multicolumn{1}{c|}{{\color[HTML]{000000} }}                            & \multicolumn{1}{c|}{{\color[HTML]{000000} Original}} & \multicolumn{5}{c|}{{\color[HTML]{000000} Background changes}}                                                                                                                         & \multicolumn{4}{c|}{{\color[HTML]{000000} Size changes}}                                                                                       & \multicolumn{1}{c|}{{\color[HTML]{000000} Position}} & \multicolumn{1}{c}{{\color[HTML]{000000} Direction}}          \\ \cline{2-13}
\multicolumn{1}{c|}{\multirow{-2}{*}{{\color[HTML]{000000} Models}}}    & \multicolumn{1}{c|}{{\color[HTML]{000000} }}         & {\color[HTML]{000000} Inver} & {\color[HTML]{000000} \( \lambda \) = -20} & {\color[HTML]{000000} \( \lambda \) = 20} & {\color[HTML]{000000} \( \lambda \) = 20-adv} & \multicolumn{1}{c|}{{\color[HTML]{000000} Random}} & {\color[HTML]{000000} Full}  & {\color[HTML]{000000} 0.10}  & {\color[HTML]{000000} 0.08}  & \multicolumn{1}{c|}{{\color[HTML]{000000} 0.05}}  & \multicolumn{1}{c|}{{\color[HTML]{000000} rp}}       & \multicolumn{1}{c}{{\color[HTML]{000000} rd}}     \\ \hline
\multicolumn{1}{c|}{{\color[HTML]{000000} standard ResNet50}}           & \multicolumn{1}{c|}{{\color[HTML]{000000} 92.69}}    & {\color[HTML]{000000} 1.97}  & {\color[HTML]{000000} 7.30}    & {\color[HTML]{000000} 13.35}  & {\color[HTML]{000000} 29.92}      & \multicolumn{1}{c|}{{\color[HTML]{000000} 13.34}}  & {\color[HTML]{000000} 2.71}  & {\color[HTML]{000000} 7.25}  & {\color[HTML]{000000} 10.51} & \multicolumn{1}{c|}{{\color[HTML]{000000} 21.26}} & \multicolumn{1}{c|}{{\color[HTML]{000000} 26.46}}    & \multicolumn{1}{c}{{\color[HTML]{000000} 25.12}}                
\\ \hline
\multicolumn{1}{c|}{{\color[HTML]{000000} LCM (ResNet50)}} & \multicolumn{1}{c|}{{\color[HTML]{000000} 92.60}}    & {\color[HTML]{000000} 1.72}  & {\color[HTML]{000000} 6.70}    & {\color[HTML]{000000} 11.55}  & {\color[HTML]{000000} 29.95}      & \multicolumn{1}{c|}{{\color[HTML]{000000} 13.64}}  & {\color[HTML]{000000} 3.20}  & {\color[HTML]{000000} 6.08}  & {\color[HTML]{000000} 8.7}  & \multicolumn{1}{c|}{{\color[HTML]{000000} 18.54}} & \multicolumn{1}{c|}{{\color[HTML]{000000} 23.58}}    & \multicolumn{1}{c}{{\color[HTML]{000000} 22.92}}   \\
\multicolumn{1}{c|}{{\color[HTML]{000000} LCM+SimCLR (ResNet50)}}        & \multicolumn{1}{c|}{{\color[HTML]{000000} 92.55}}    & {\color[HTML]{000000} 1.40}  & {\color[HTML]{000000} 7.36}    & {\color[HTML]{000000} 12.23}  & {\color[HTML]{000000} 31.53}      & \multicolumn{1}{c|}{{\color[HTML]{000000} 14.15}}  & {\color[HTML]{000000} 2.96}  & {\color[HTML]{000000} 6.84}  & {\color[HTML]{000000} 10.57} & \multicolumn{1}{c|}{{\color[HTML]{000000} 20.55}} & \multicolumn{1}{c|}{{\color[HTML]{000000} 24.23}}    & \multicolumn{1}{c}{{\color[HTML]{000000} 23.39}}   \\
 \hline
\multicolumn{13}{c}{{\color[HTML]{000000} \textbf{(b) Absolute Accuracy}}}                                                                                                                                                                                                                                                                                                                                                                                                                                                                                                                                       \\ \hline
\multicolumn{1}{c|}{{\color[HTML]{000000} }}                            & \multicolumn{1}{c|}{{\color[HTML]{000000} Original}} & \multicolumn{5}{c|}{{\color[HTML]{000000} Background changes}}                                                                                                                         & \multicolumn{4}{c|}{{\color[HTML]{000000} Size changes}}                                                                                       & \multicolumn{1}{c|}{{\color[HTML]{000000} Position}} & \multicolumn{1}{c}{{\color[HTML]{000000} Direction}}                       \\ \cline{2-13}
\multicolumn{1}{c|}{\multirow{-2}{*}{{\color[HTML]{000000} Models}}}    & \multicolumn{1}{c|}{{\color[HTML]{000000} }}         & {\color[HTML]{000000} Inver} & {\color[HTML]{000000} \( \lambda \) = -20} & {\color[HTML]{000000} \( \lambda \) = 20} & {\color[HTML]{000000} \( \lambda \) = 20-adv} & \multicolumn{1}{c|}{{\color[HTML]{000000} Random}} & {\color[HTML]{000000} Full}  & {\color[HTML]{000000} 0.10}  & {\color[HTML]{000000} 0.08}  & \multicolumn{1}{c|}{{\color[HTML]{000000} 0.05}}  & \multicolumn{1}{c|}{{\color[HTML]{000000} rp}}       & \multicolumn{1}{c}{{\color[HTML]{000000} rd}} \\ \hline
\multicolumn{1}{c|}{{\color[HTML]{000000} standard ResNet50}}           & \multicolumn{1}{c|}{{\color[HTML]{000000} 92.69}}    & {\color[HTML]{000000} 90.72} & {\color[HTML]{000000} 85.39}   & {\color[HTML]{000000} 79.34}  & {\color[HTML]{000000} 62.77}      & \multicolumn{1}{c|}{{\color[HTML]{000000} 79.35}}  & {\color[HTML]{000000} 89.98} & {\color[HTML]{000000} 85.44} & {\color[HTML]{000000} 82.18} & \multicolumn{1}{c|}{{\color[HTML]{000000} 71.43}} & \multicolumn{1}{c|}{{\color[HTML]{000000} 66.23}}    & \multicolumn{1}{c}{{\color[HTML]{000000} 67.57}}              
\\ \hline
\multicolumn{1}{c|}{{\color[HTML]{000000} LCM (ResNet50)}} & \multicolumn{1}{c|}{{\color[HTML]{000000} 92.60}}    & {\color[HTML]{000000} 90.88} & {\color[HTML]{000000} 85.90}   & {\color[HTML]{000000} 81.05}  & {\color[HTML]{000000} 62.65}      & \multicolumn{1}{c|}{{\color[HTML]{000000} 78.96}}  & {\color[HTML]{000000} 89.40} & {\color[HTML]{000000} 86.52} & {\color[HTML]{000000} 83.90} & \multicolumn{1}{c|}{{\color[HTML]{000000} 74.06}} & \multicolumn{1}{c|}{{\color[HTML]{000000} 69.02}}    & \multicolumn{1}{c}{{\color[HTML]{000000} 69.68}}                   \\
\multicolumn{1}{c|}{{\color[HTML]{000000} LCM+SimCLR (ResNet50)}}        & \multicolumn{1}{c|}{{\color[HTML]{000000} 92.55}}    & {\color[HTML]{000000} 91.15} & {\color[HTML]{000000} 85.19}   & {\color[HTML]{000000} 80.32}  & {\color[HTML]{000000} 61.02}      & \multicolumn{1}{c|}{{\color[HTML]{000000} 78.40}}  & {\color[HTML]{000000} 89.59} & {\color[HTML]{000000} 85.71} & {\color[HTML]{000000} 81.98} & \multicolumn{1}{c|}{{\color[HTML]{000000} 72.00}} & \multicolumn{1}{c|}{{\color[HTML]{000000} 68.32}}    & \multicolumn{1}{c}{{\color[HTML]{000000} 69.16}}               \\ \hline
\end{tabular}
\vspace{-3mm}
\end{table}

\subsection{ImageNet-X}

Following the detailed results in Table~\ref{tab:imagenet_x_all}, it is evident that LCM-trained fully supervised ImageNet weights and self-supervised ImageNet weights offer better accuracies than the standard weights. Notably, LCM-trained weights demonstrate better or more competitive performance when compared to MPD-trained ImageNet weights.

\begin{table}[h]
\tiny
\centering
\caption{ImageNet-X: error ratios comparisons for MPD and LCM. Error ration close to 1.0 is ideal robustness for each factor. LCM demonstrate consistent robustness compared to standard ResNet50 model and matches the performance with MPD in supervised and self-supervised approaches.}
\label{tab:imagenet_x_all}
\begin{tabular}{c|ccc|cc}
\hline
\multirow{2}{*}{Factor} & \multicolumn{3}{c|}{Supervised}                      & \multicolumn{2}{c}{Self-supervsied} \\
                        & \multicolumn{1}{c|}{Standard ResNet50} & MPD  & LCM  & MPD+SimCLR       & LCM+SimCLR       \\ \hline
pose                    & \multicolumn{1}{c|}{0.70}              & 0.72 & 0.73 & 0.74             & 0.72             \\
pattern                 & \multicolumn{1}{c|}{0.77}              & 0.89 & 0.81 & 0.78             & 0.82             \\
partial\_view           & \multicolumn{1}{c|}{1.13}              & 0.89 & 0.86 & 0.86             & 0.87             \\
background              & \multicolumn{1}{c|}{1.14}              & 1.08 & 1.10 & 1.09             & 1.11             \\
brighter                & \multicolumn{1}{c|}{1.17}              & 1.09 & 1.10 & 1.09             & 1.09             \\
color                   & \multicolumn{1}{c|}{1.19}              & 1.21 & 1.17 & 1.18             & 1.16             \\
larger                  & \multicolumn{1}{c|}{1.32}              & 1.30 & 1.24 & 1.22             & 1.24             \\
smaller                 & \multicolumn{1}{c|}{1.50}              & 1.44 & 1.51 & 1.50             & 1.51             \\
darker                  & \multicolumn{1}{c|}{1.72}              & 1.52 & 1.50 & 1.51             & 1.52             \\
shape                   & \multicolumn{1}{c|}{1.75}              & 1.63 & 1.64 & 1.61             & 1.59             \\
object\_blocking        & \multicolumn{1}{c|}{1.92}              & 1.66 & 1.62 & 1.61             & 1.60             \\
style                   & \multicolumn{1}{c|}{1.93}              & 1.73 & 1.75 & 1.63             & 1.68             \\
multiple\_objects       & \multicolumn{1}{c|}{1.97}              & 1.76 & 1.72 & 1.72             & 1.69             \\
subcategory             & \multicolumn{1}{c|}{2.10}              & 1.92 & 1.94 & 1.90             & 1.84             \\
person\_blocking        & \multicolumn{1}{c|}{2.17}              & 1.95 & 1.98 & 1.97             & 2.10             \\
texture                 & \multicolumn{1}{c|}{2.39}              & 2.20 & 2.24 & 2.18             & 2.20             \\ \hline
Average                 & \multicolumn{1}{c|}{1.55}              & 1.44 & \textbf{1.43} & 1.41             & 1.42             \\ \hline
\end{tabular}

\end{table}

\end{document}